\newtheorem{proposition}{Proposition}
\newcommand*{\rom}[1]{\expandafter\@slowromancap\romannumeral #1@}
\newcommand{\para}[1]{\vspace{-0.5ex}\textbf{#1}}
\newcommand{\xoverbrace}[2][\vphantom{\dfrac{A}{A}}]{\overbrace{#1#2}}
\newcommand{\xunderbrace}[2][\vphantom{\dfrac{A}{A}\sum_{i\neq\hat{y}}}]{\underbrace{#1#2}}
\title{Enhancing The Reliability of \\ Out-of-distribution Image Detection in\\ Neural Networks}
\author{
Shiyu Liang\\
  Coordinated Science Lab, Department of ECE\\
  University of Illinois at Urbana-Champaign\\
  \texttt{sliang26@illinois.edu} \\
   \And
   Yixuan Li \\
  University of Wisconsin-Madison\thanks{Work done while at Cornell University.}\\
   \texttt{sharonli@cs.wisc.edu} \\
   \And
   R. Srikant \\
  Coordinated Science Lab, Department of ECE\\
  University of Illinois at Urbana-Champaign\\
  \texttt{rsrikant@illinois.edu} \\
}
\begin{document}

\maketitle

\begin{abstract}
We consider the problem of detecting {\em out-of-distribution} images in neural networks. We propose \textit{ODIN}, a simple and effective method that does not require any change to a pre-trained neural network. Our method is based on the observation that using temperature scaling and adding small perturbations to the input can  separate the softmax score distributions between in- and out-of-distribution images, allowing for more effective detection. We show in a series of experiments that \textit{ODIN} is compatible with diverse network architectures and datasets. It consistently outperforms the baseline approach~\citep{hendrycks2016baseline} by a large margin, establishing a new state-of-the-art performance on this task. For example, \textit{ODIN} reduces the false positive rate from the baseline 34.7\% to 4.3\% on the DenseNet (applied to CIFAR-10 and Tiny-ImageNet) when the true positive rate is 95\%. 

\end{abstract}

\vspace{-.3cm}
\section{Introduction}
Modern neural networks {are known to} generalize well {when the training and testing data are sampled from the same distribution~\citep{krizhevsky2012imagenet,simonyan2014very,he2016deep,cho2014learning,zhang2016understanding}.} However, when deploying neural networks {in real-world applications}, there is often very little control over the testing data distribution. {Recent works have} shown that neural networks tend to {make} high confidence predictions even for completely unrecognizable~\citep{nguyen2015deep} or irrelevant inputs~\citep{ hendrycks2016baseline,szegedy2013intriguing,moosavi2016universal}.  It has been well documented \citep{amodei2016concrete} that it is important for classifiers to be aware of uncertainty when shown new kinds of inputs, i.e., \textbf{out-of-distribution} examples. 
Therefore, being able to accurately detect out-of-distribution examples can be practically important for visual recognition tasks~\citep{krizhevsky2012imagenet,farabet2013learning,ji20133d}.


A seemingly straightforward approach of detecting out-of-distribution images is to enlarge the training set of both in- and out-of-distribution examples. However, the number of out-of-distribution examples can be infinitely many, making the re-training approach computationally expensive and intractable. Moreover, to ensure that a neural network accurately classifies in-distribution samples into correct classes while correctly detecting out-of-distribution samples, one might need to employ exceedingly large neural network architectures, which further complicates the training process. 

 
\citeauthor{hendrycks2016baseline} proposed a baseline method to detect out-of-distribution examples without further re-training networks. The method is based on an observation that a well-trained neural network tends to assign higher softmax scores to in-distribution examples than out-of-distribution examples. In this paper, we go further. We observe that after using temperature scaling in the softmax function~\citep{hinton2015distilling,pereyra2017regularizing} and adding small controlled perturbations to inputs,  the softmax score gap between in - and out-of-distribution examples is further enlarged. We  show that the combination of these two techniques (temperature scaling and input perturbation) can lead to better detection performance. For example, provided with a pre-trained DenseNet~\citep{huang2016densely} on CIFAR-10 dataset (positive samples), we test against images from TinyImageNet dataset (negative samples). Our method reduces the False Positive Rate (FPR), i.e., the fraction of misclassified out-of-distribution samples, from $34.7\%$ to $4.3\%$, when $95\%$ of in-distribution images are correctly classified. 
We summarize the main contributions of this paper as the following:
\begin{itemize}[leftmargin=*]
\item We propose a simple and effective method, ODIN (\textbf{O}ut-of-\textbf{DI}stribution detector for \textbf{N}eural networks), for detecting out-of-distribution examples in neural networks. Our method does not require re-training the neural network and is easily implementable on any modern neural architecture.
\item We test ODIN on state-of-the-art network architectures (e.g., DenseNet~\citep{huang2016densely} and Wide ResNet~\citep{wide}) under a diverse set of in- and out-distribution dataset pairs. We show ODIN can significantly improve the detection performance, and consistently outperforms the baseline method~\citep{hendrycks2016baseline} by a large margin. 
\item We empirically analyze how parameter settings affect the performance, and further provide simple analysis that provides some intuition behind our method. 
\end{itemize}

The outline of this paper is as follows. In Section~\ref{sec:problemformulation}, we present the necessary definitions and the problem statement.  In Section~\ref{sec:method}, we introduce ODIN and present performance results in Section~\ref{sec:experiment}. We experimentally analyze the proposed method and provide some justification for our method in Section~\ref{sec:discussions}. We summarize the related works and future directions in Section~\ref{sec::related} and conclude the paper in Section~\ref{sec:conclusions}.

\section{Problem Statement}\label{sec:problemformulation}
In this paper, we consider the problem of distinguishing in- and out-of-distribution images on a pre-trained neural network.  Let $P_{\bm{X}}$ and $Q_{\bm{X}}$ denote two distinct data distributions defined on the image space $\mathcal{X}$. Assume that a neural network $\bm{f}$ is trained on a dataset  drawn from the distribution $P_{\bm{X}}$.  We call $P_{\bm{X}}$  the \textbf{in-distribution} and $Q_{\bm{\bm{X}}}$ the  \textbf{out-distribution}, respectively. In testing, we draw new images from a mixture distribution $\mathbb{P}_{\bm{X}\times Z}$ defined on $\mathcal{X}\times\{0,1\}$, where the conditional probability distributions ${\mathbb{P}_{\bm{X}|Z=0}=P_{\bm{X}}}$ and $\mathbb{P}_{\bm{X}|Z=1}=Q_{\bm{X}}$ denote in- and out-distribution respectively. We consider the following problem: Given an image $\bm{X}$ drawn from the mixture distribution $\mathbb{P}_{\bm{X}\times Z}$, {\em can we distinguish whether the image is from in-distribution $P_{\bm{X}}$ or not}?

In this paper, we focus on detecting out-of-distribution images. However, it is equally important to correctly classify an image into the right class if it is an in-distribution image. But this can be easily done: once it has been detected that an image is in-distribution, we can simply use the original image
 and run it through the neural network to classify it. Thus, we do not change the predictions of the neural network for in-distribution images and only focus on improving the detection performance for out-of-distribution images.

\section{ODIN: Out-of-distribution Detector}\label{sec:method}


{In this section, we present our method, ODIN, for detecting out-of-distribution samples. The detector is built on two components: temperature scaling and input preprocessing. We describe the details of both components below.}

\para{Temperature Scaling.} Assume that the neural network $\bm{f}=(f_{1},...,f_{N})$ is trained to classify $N$ classes. For each input $\bm{x}$, the neural network assigns a label $\hat{y}(\bm{x})=\arg\max_{i}S_{i}(\bm{x};T)$  by computing the softmax output for each class. Specifically,\vspace{-0.1cm} 
\begin{equation}\label{eq::softmax}S_{i}(\bm{x};T)=\frac{\exp\left({{f_{i}(\bm{x})}/{T}}\right)}{\sum_{j=1}^{N}\exp\left({{f_{j}(\bm{x})}/{T}}\right)},\vspace{-0.2cm} \end{equation}
where $T\in\mathbb{R}^{+}$ is the temperature scaling parameter and  set to 1 during the training. For a given input $\bm{x}$, we call the maximum softmax probability, i.e., $S_{\hat{y}}(\bm{x};T)=\max_i S_{i}(\bm{x};T)$  the \textbf{softmax score}. In this paper, we use notations $S_{\hat{y}}(\bm{x};T)$ and $S(\bm{x};T)$ interchangeably. Prior works have established the use of temperature scaling to distill the knowledge in neural networks~\citep{hinton2015distilling} and calibrate the prediction confidence in classification tasks \citep{guo2017calibration}. As we shall see, using temperature scaling can  separate the softmax scores between in- and out-of-distribution images, making out-of-distribution detection effective. 


\para{Input Preprocessing.} In addition to temperature scaling, we preprocess the input by adding small perturbations:
\vspace{-0.1cm}
\begin{equation}\label{eq:perturbation}\tilde{\bm{x}}={\bm{x}}-\varepsilon\text{sign}(-\nabla_{{\bm{x}}}\log S_{\hat{y}}({\bm{x}};T)),\end{equation}
where the parameter $\varepsilon$ is the  perturbation magnitude. The method is inspired by the idea of adversarial examples ~\citep{goodfellow2014explaining}, where small perturbations are added to decrease the softmax score for the {true} label  and force the neural network to make a wrong prediction.  
Here, our goal and setting are the opposite: we aim to increase the softmax score of any given input, without the need for a class label at all. As we shall see later, the perturbation can have stronger effect on the in- distribution images than that on out-of-distribution images, making them more separable. 
 Note that the perturbations can be easily computed by back-propagating the gradient of the cross-entropy loss w.r.t the input.

\para{Out-of-distribution Detector.} The detector combines the two components described above. For each image $\bm{x}$, we first calculate  the preprocessed image $\tilde{\bm{x}}$ according to the equation~\eqref{eq:perturbation}. Next, we feed the preprocessed image $\tilde{\bm{x}}$ into the neural network, calculate its calibrated softmax score $S(\tilde{\bm{x}};T)$ and compare the score to the threshold $\delta$. An image $\bm{x}$ is classified as in-distribution if the softmax score  is greater than the threshold and vice versa. Mathematically, the out-of-distribution detector can be described as \vspace{-0.2cm}
$$g(\bm{x};\delta,T,\varepsilon)=\left\{\begin{matrix}1&\text{if} \max_{i}p(\tilde{\bm{x}};T)\le\delta,\\ 0&\text{if} \max_{i}p(\tilde{\bm{x}};T)>\delta. \end{matrix}\right.$$
The parameters $T,\varepsilon$ and $\delta$ are chosen so that the true positive rate (i.e., the fraction of in-distribution images correctly classified as in-distribution images) is $95\%.$ 

\section{Experiments}\label{sec:experiment}

In this section, we demonstrate the effectiveness of {ODIN} on several computer vision benchmark datasets.  
We run all experiments with PyTorch\footnote{\url{http://pytorch.org}} and we release the code to reproduce all {experimental} results\footnote{\url{https://github.com/facebookresearch/odin}}.

\subsection{Training Setup}\label{sec::training}

\para{Architectures and training configurations.} We adopt {two} state-of-the-art neural network architectures, including  {\em DenseNet}~\citep{huang2016densely} and {\em Wide ResNet}~\citep{wide}. 
For DenseNet, our model follows the same setup as in ~\citep{huang2016densely}, with depth $L=100$, growth rate $k=12$ (Dense-BC) and dropout rate 0. In addition, we evaluate the method on a Wide ResNet, with depth $28$, width 10 (WRN-28-10) and dropout rate 0. The hyper-parameters of neural networks are set identical to the original Wide ResNet~\citep{wide} and DenseNet~\citep{huang2016densely} implementations. All neural networks are trained with stochastic gradient descent with Nesterov momentum~\citep{duchi2011adaptive,kingma2014adam}. Specifically, we train Dense-BC for 300 epochs with batch size 64 and momentum 0.9;  
and Wide ResNet for 200 epochs with batch size 128 and momentum 0.9.  
The learning rate starts at 0.1, and is dropped by a factor of 10 at  $50\%$ and $75\%$ of the training progress, respectively. 

\begin{wraptable}{r}{0.4\textwidth}
\vspace{-0.5cm}
\centering
\small
\begin{tabular}{ccc}
\toprule
{Architecture} & {\bf C-10} & {\bf C-100} \\
\midrule
{\bf Dense-BC}& 4.81 & 22.37\\
{\bf WRN-28-10} & 3.71 & 19.86 \\
\bottomrule
\end{tabular}
\caption{Test error rates on {CIFAR-10}   and CIFAR-100  datasets.}
\vspace{-0.5cm}
\label{tab:pre-train-acc}
\end{wraptable}
\vspace{0.1cm}
\para{Accuracy of pre-trained networks.} Each neural network architecture is trained on CIFAR-10 (C-10)  and CIFAR-100 (C-100) datasets~\citep{cifar}, respectively. CIFAR-10 and CIFAR-100 images are drawn from 10 and 100 classes, respectively. Both datasets consist of 50,000 training images and 10,000 test images. The test error on CIFAR datasets are given in Table~\ref{tab:pre-train-acc}.

\subsection{Out-of-distribution Datasets}\label{sec::ood-datasets}
At test time, the test images from CIFAR-10 (CIFAR-100) datasets can be viewed as the in-distribution (positive) examples. For out-of-distribution (negative) examples, we follow the setting in~\citep{hendrycks2016baseline} and test on several different natural image datasets and synthetic noise datasets. We consider the following out-of-distribution test datasets. 
\begin{itemize}[leftmargin=*]
\item[(1)] \para{TinyImageNet.} The Tiny ImageNet dataset\footnote{\url{https://tiny-imagenet.herokuapp.com}} consists of a subset of ImageNet images~\citep{deng2009imagenet}. It contains 10,000 test images from 200 different classes. We construct two datasets, {\em TinyImageNet (crop)} and {\em TinyImageNet (resize)}, by either randomly cropping image patches of size $32 \times 32$ or downsampling each image to size $32 \times 32$. 

\item[(2)]\para{LSUN.} The Large-scale Scene UNderstanding dataset (LSUN) has a testing set of {10,000} images of 10 different scenes categories such as \emph{bedroom, kitchen room, living room, etc.}~\citep{yu15lsun}. Similar to TinyImageNet, we construct two datasets, {\em LSUN (crop)} and {\em LSUN (resize)}, by randomly cropping and downsampling the LSUN testing set, respectively. 


\item[(3)]\para{Gaussian Noise.} The synthetic Gaussian noise dataset consists of 10,000 random 2D Gaussian noise images, where each RGB value of every pixel is sampled from an i.i.d Gaussian distribution with mean 0.5 and unit variance. We further clip each pixel value into the range $[0, 1]$.  

\item[(4)]\para{Uniform Noise.} The synthetic uniform noise dataset consists of 10,000 images where each RGB value of every pixel is independently and identically sampled from a uniform distribution on $[0,1]$.  
\end{itemize}

For hyperparameter tuning, we use a separate validation dataset iSUN~\citep{xu2015turkergaze}, which is independent from the OOD test datasets. iSUN~\citep{xu2015turkergaze}  consists of natural scene images. We include the entire collection of 8925 images in iSUN and downsample each image to size $32$ by $32$.

\begin{table}[t]
\vspace{-0.5cm}
\centering
\small
\begin{tabular}{llcccccc}
\toprule
& {\bf Out-of-distribution}  &\bf{FPR} & \bf{Detection} & {\bf AUROC} & {\bf AUPR} &   {\bf AUPR}\\
& {\bf dataset} &\bf{(95\% TPR)} & \bf{Error}& &\bf{In}  &\bf{Out}\\
&   &$\downarrow$ & $\downarrow$&  $\uparrow$ & $\uparrow$ &$\uparrow$ \\
\midrule
\multirow{8}{0.12\linewidth}{{{\bf Dense-BC} CIFAR-10}}  
&   & \multicolumn{5}{c}{{ \bf{Baseline~\citep{hendrycks2016baseline} / ODIN}}} \\
\cmidrule{3-7}
& TinyImageNet (crop) &34.7/\bf{4.3} &10.0/\bf{4.7}& 95.3/\bf{99.1}& 96.4/\bf{99.1}&93.8/\bf{99.1}  \\ 
& TinyImageNet (resize) &40.8/\bf{7.5} & 11.5/\bf{6.1}& 94.1/\bf{98.5}& 95.1/\bf{98.6}&92.4/\bf{98.5}  \\ 
& LSUN (crop) &39.3/\bf{11.4} & 10.2/\bf{7.2}& 94.8/\bf{97.9}& 96.0/\bf{98.0}&93.1/\bf{97.9}  \\ 
& LSUN (resize) &33.6/\bf{3.8} & 9.8/\bf{4.4}& 95.4/\bf{99.2}& 96.4/\bf{99.3}&94.0/\bf{99.2}  \\ 
& Uniform &23.5/\bf{0.0} & 5.3/\bf{0.5}& 96.5/\bf{99.0}& 97.8/\bf{100.0}&93.0/\bf{99.0}  \\ 
& Gaussian &12.3\bf{/0.0} &4.7/\bf{0.2}& 97.5/\bf{100.0}& 98.3/\bf{100.0}&95.9/\bf{100.0}  \\ 
\midrule
\multirow{7}{0.12\linewidth}{{ {\bf Dense-BC} CIFAR-100}}  
& TinyImageNet (crop) &67.8/\bf{26.9}& 36.4/\bf{12.9}& 83.0/\bf{94.5}& 85.3/\bf{94.7}&80.8/\bf{94.5}  \\ 
& TinyImageNet (resize) &82.2/\bf{57.0} & 43.6/\bf{22.7}& 70.4/\bf{85.5}& 71.4/\bf{86.0}&68.6/\bf{84.8}  \\ 
& LSUN (crop) &69.4/\bf{18.6} & 37.2/\bf{9.7}& 83.7/\bf{96.6}& 86.2/\bf{96.8}&80.9/\bf{96.5}  \\ 
& LSUN (resize) &83.3/\bf{58.0} & 44.1/\bf{22.3}& 70.6/\bf{86.0}& 72.5/\bf{87.1}&68.0/\bf{84.8}  \\ 
& Uniform &100.0/\bf{100.0}& 35.86/\bf{17.9}& 43.1/\bf{99.5}& 63.2/\bf{87.5}&41.9/\bf{65.1}  \\   
& Gaussian &100.0/\bf{100.0} & 41.2/\bf{38.0}& 30.6/\bf{40.5}& 53.4/\bf{60.5}&37.6/\bf{40.9}  \\ 
\bottomrule
\end{tabular}
\vspace{-0.2cm}
\caption[]{\small Distinguishing in- and out-of-distribution test set data for image classification. All values are percentages. $\uparrow$ indicates larger value is better, and $\downarrow$ indicates lower value is better. We use $T=1000$ for all experiments. The noise magnitude $\varepsilon$ was selected on a \textbf{separate validation dataset}, which is different from the out-of-distribution test sets. On CIFAR-10 pretrained model, we use $\varepsilon=0.0014$ for all OOD test datasets; and $\varepsilon=0.002$ for CIFAR-100 pretrained model.}
\label{tab:main-results}
\vspace{-0.5cm}
\end{table}

\subsection{Evaluation metrics}
We adopt the following four different metrics to measure the effectiveness of a neural network in distinguishing in- and out-of-distribution images. 
\begin{itemize}[leftmargin=*]
\item[(1)] {\bf FPR at $95\%$ TPR } can be interpreted as the probability that a negative (out-of-distribution) example is misclassified as positive (in-distribution) when the true positive rate (TPR) is as high as $95\%$. 
\item[(2)] {\bf Detection Error}, i.e., $P_{e}$  measures the misclassification probability when TPR is 95\%. The definition of $P_{e}$ is given by $P_e=0.5(1-\text{TPR})+0.5\text{FPR}$, where we assume that both positive and negative examples have the equal probability of appearing in the test set. 
\item[(3)] {\bf AUROC} is the Area Under the Receiver Operating Characteristic curve, which is also a threshold-independent metric~\citep{davis2006relationship}. The ROC curve depicts the relationship between {TPR and FPR}. The AUROC can be interpreted as the probability that a positive example is assigned a higher detection score than a negative example~\citep{fawcett2006introduction}.  
A perfect detector corresponds to an AUROC score of $100\%$. 
\item[(4)] {\bf AUPR} is the Area under the Precision-Recall curve, which is another  threshold independent metric~\citep{manning1999foundations,saito2015precision}. The PR curve is a graph showing the precision=TP/(TP+FP) and {recall=TP/(TP+FN)} against  each other. The metric AUPR-In and AUPR-Out in Table~\ref{tab:main-results} denote the area under the precision-recall curve where in-distribution and out-of-distribution images are specified as positives, respectively. 
\end{itemize}

\begin{wrapfigure}{R}{0.35\linewidth}
\vspace{-0cm}
   \centering
    \includegraphics[width=1\linewidth]{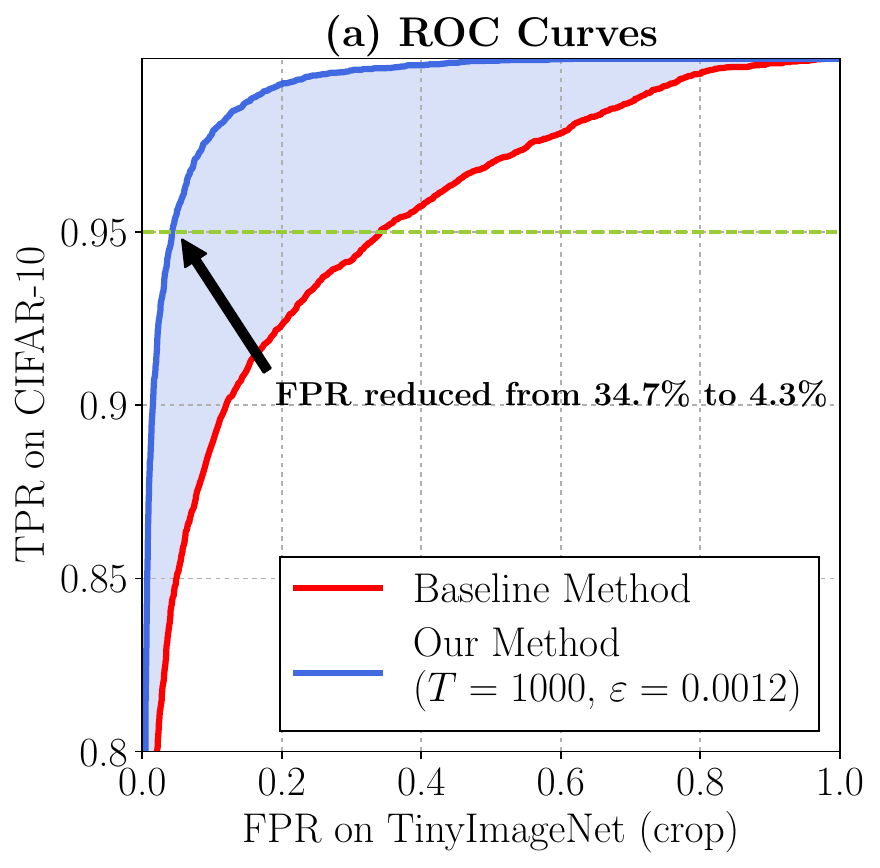}

  \vspace{-0.3cm}
  \caption{\small (a) ROC curves of baseline (red) and our method (blue) on DenseNet-BC-100 network, where CIFAR-10 and TinyImageNet (crop) are in- and out-of-distribution dataset, respectively.}
   \label{fig::original}
  \vspace{-0.5cm}
\end{wrapfigure}

\subsection{Experimental Results}\label{sec::expresults}

\para{Comparison with baseline.} In Figure~\ref{fig::original}, we show the ROC curves when  DenseNet-BC-100 is evaluated on CIFAR-10 (positive) images against TinyImageNet (negative) test examples.  The red curve corresponds to the ROC curve {when using} baseline method~\citep{hendrycks2016baseline}, whereas the blue curve corresponds to ODIN. We observe a {strikingly large} gap between the blue and red ROC curves. For example, when TPR$=95\%$, the FPR can be reduced from $34\%$ to $4.2\%$ by {using our approach}.

\para{ Hyperparameters.} We use a separate OOD validation dataset for hyperparameter selection, which is independent from the OOD test datasets. For temperature $T$, we select among 1, 2, 5, 10, 20, 50, 100, 200, 500, 1000; and for perturbation magnitude $\varepsilon$ we choose from 21 evenly spaced numbers starting from 0 and ending at 0.004. The optimal parameters are chosen to minimize the FPR at TPR 95\% on the validation OOD dataset.  

\vspace{0.1cm}

\para{Main results.} The main results are summarized in  Table~\ref{tab:main-results}, where we use iSUN~\citep{xu2015turkergaze} as validation set. We use $T=1000$ for all settings. For DenseNet, we use $\varepsilon=0.0014$ for CIFAR-10 and $\varepsilon=0.002$ for CIFAR-100. We provide additional details on the effect of parameters in Section~\ref{sec:discussions}.  For each in- and out-of-distribution dataset pair, we report both the performance of the baseline~\citep{hendrycks2016baseline} and ODIN. In Table~\ref{tab:main-results}, we observe significant performance  improvement across all dataset pairs. 

%


\para{Parameter transferability.} In Table~\ref{tab::add1}, we show how the parameters tuned on one validation set can generalize across datasets. Specifically, we tune the parameters using one validation dataset and then evaluated on the remaining OOD test datasets. The results are very similar across different validation sets, which suggests the insensitivity of our method w.r.t the tuning set.

\begin{table}[h]
\centering
\small
\begin{tabular}{lcccccccc}
\toprule
	&\multicolumn{7}{c}{ \bf{DenseNet-BC-100}} \\
\midrule
 {\bf{Validation set}} &{\bf  ImgNet (c)} & {\bf  ImgNet (r)} &  {\bf  LSUN (c)} &  {\bf  LSUN (r)} &  {\bf  iSUN} &{\bf  Gaussian} &{\bf  Uniform}  \\
 \midrule
  \textbf{Test set} & \multicolumn{7}{c}{{ \bf{Baseline~\citep{hendrycks2016baseline} / ODIN}}} \\
\cmidrule{2-8}
 {\bf  ImgNet (c)}  	&-	& 34.7/4.3		&34.7/6.6	  & 34.7/4.3		&34.7/4.3		        & 34.7/4.3		&34.7/4.3	      \\
 {\bf  ImgNet (r)}  	&40.7/7.5	 &-		&40.7/14.9          &40.7/7.5	        &40.7/7.5		&40.7/7.5         &40.7/7.5	  \\
 {\bf  LSUN (c)}  	&39.3/13.8 &39.3/13.8  &-           &39.3/13.8         &39.3/11.4       &39.3/13.8    &39.3/13.8\\
 {\bf  LSUN (r)}  	&33.6/4.8   &33.6/4.8     &33.6/10.4           &-            &33.6/3.8          &33.6/4.8       &33.6/4.8    \\
 {\bf  Gaussian}         &23.5/0.0    &23.5/0.0    &23.5/0.4    &23.5/0.0    &23.5/0.0      &-    &23.5/0.0 \\
 {\bf  Uniform}  	        &12.3/0.0     &12.3/0.0     &12.3/4.5   &12.3/0.0     &12.3/0.0   &12.3/0.0     &-  \\
\bottomrule
\end{tabular}
\caption[]{Detection performance using different validation OOD datasets. The hyperparameters  are tuned using one validation dataset and then evaluate on the remaining OOD test datasets. The neural network is pre-trained on CIFAR-10.}
\label{tab::add1}
\end{table}

\para{Data distributional distance vs. detection performance.} To measure the statistical distance between in- and out-of-distribution datasets, we adopt a commonly used metric, maximum mean discrepancy (MMD) with Gaussian RBF kernel~\citep{sriperumbudur2010hilbert,gretton2012kernel,sutherland2016generative}. Specifically, given two image sets, $V=\{v_{1},...,v_{m}\}$ and $W=\{w_{1},...,w_{m}\}$, the maximum mean discrepancy between $V$ and $Q$ is  defined as 
$$\widehat{\text{MMD}}^{2}(V,W)=\frac{1}{{{m}\choose{2}}}\sum_{i\neq j}k(v_{i},v_{j})+\frac{1}{{{m}\choose{2}}}\sum_{i\neq j}k(w_{i},w_{j})-\frac{2}{{{m}\choose{2}}}\sum_{i\neq j}k(v_{i},w_{j}),$$
 where $k(\cdot, \cdot)$ is the Gaussian RBF kernel, i.e., $k(x, x')=\exp\left(-\frac{\|x-x'\|_{2}^{2}}{2\sigma^{2}}\right)$. We use the same method used by ~\cite{sutherland2016generative} to choose $\sigma$,  where $2\sigma^{2}$ is set to the median of all Euclidean distances between all images in the aggregate set $V\cup W$.

\begin{figure}
   \centering
    \includegraphics[width=0.6\textwidth]{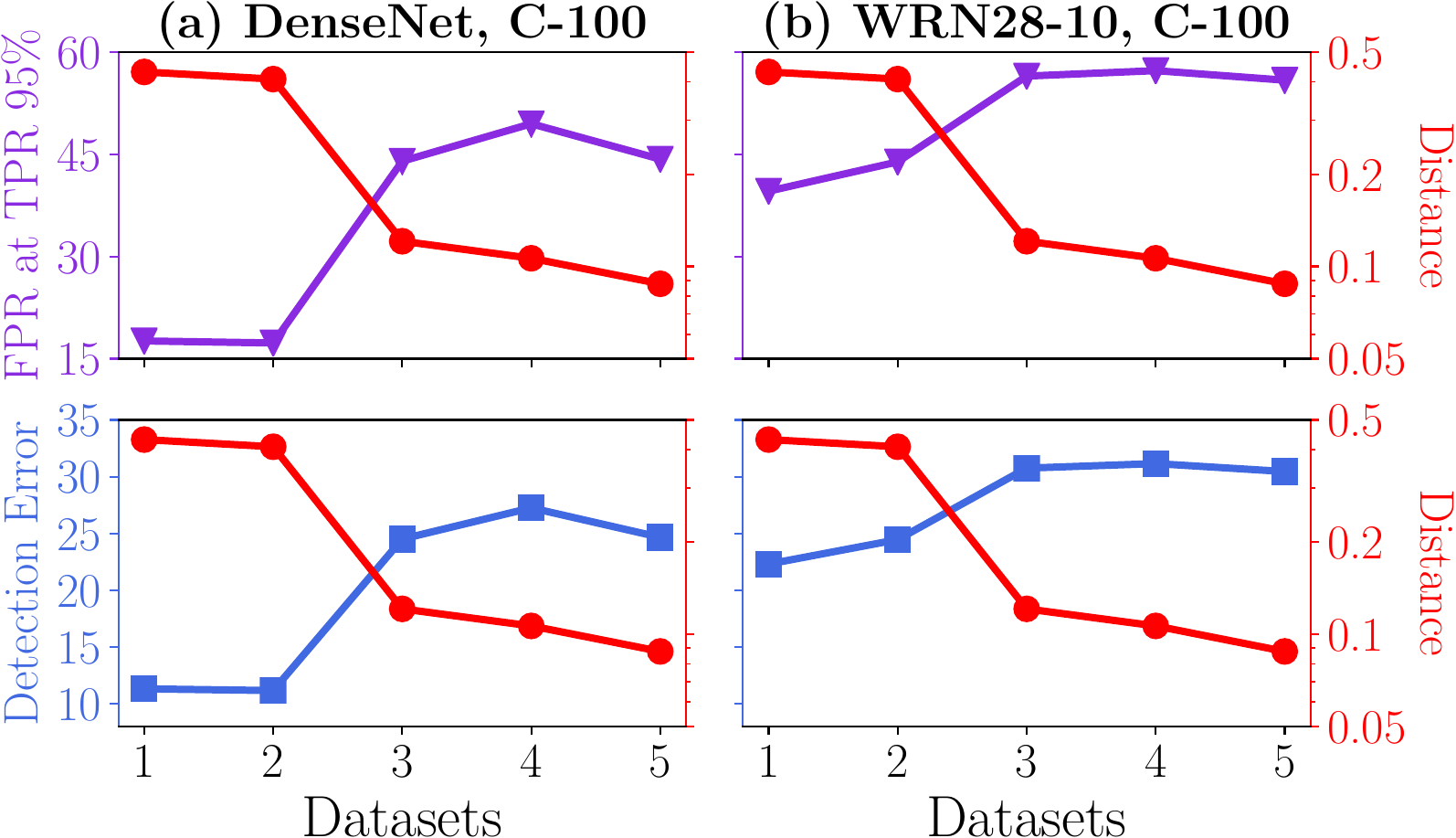}

  \vspace{-0.3cm}
  \caption{\small  (a)-(b) Performance of our method vs. MMD between in- and out-of-distribution datasets. Neural networks are trained on CIFAR-100. The out-of-distribution datasets are 1: LSUN (cop), 2: TinyImageNet (crop), 3: LSUN (resize), 4: is iSUN (resize), 5: TinyImageNet (resize).}
   \label{fig::mmd}
\end{figure}

In Figure~\ref{fig::mmd} (a)(b), we show how the performance of ODIN varies against the MMD distances between in- and out-of-distribution datasets. The datasets (on x-axis) are ranked in the descending order of MMD distances with CIFAR-100. There are two interesting observations can be drawn from these figures. First, we find that the MMD distances between the cropped datasets and CIFAR-100 tend to be larger. This is likely due to the fact that cropped images only contain local image context and are therefore more distinct from CIFAR-100 images, while resized images contain global patterns and are thus similar to images in CIFAR-100. Second, we observe that the MMD distance tends to be negatively correlated with the detection performance. This suggests that the detection task becomes harder as in- and out-of-distribution images are more similar to each other.



\section{Discussions}\label{sec:discussions}



\subsection{Analysis on Temperature Scaling}\label{sec::temperature}

In this subsection, we analyze the effectiveness of the temperature scaling method. As shown in Figure~\ref{fig::results} (a) and (b), we observe that a sufficiently large temperature yields better detection performance although the effects diminish when $T$ is too large. To gain insight, we can use the Taylor expansion of the softmax score (details provided in Appendix~\ref{appendix::taylor}). When $T$ is sufficiently large, we have \vspace{-0.3cm}
\begin{equation}\label{eq::softapprox}
S_{\hat{y}}(\bm{x};T)\approx\frac{1}{N-\frac{1}{T}\sum_{i}[f_{\hat{y}}(\bm{x})-f_{i}(\bm{x})]+\frac{1}{2T^{2}}\sum_{i}[f_{\hat{y}}(\bm{x})-f_{i}(\bm{x})]^{2}}, 
\end{equation}
by omitting the third and higher orders.  
For simplicity of notation,  we define  
\begin{equation}\label{eq::u1u2}
U_{1}(\bm{x})=\frac{1}{N-1}\sum_{i\neq\hat{y}}[f_{\hat{y}}(\bm{x})-f_{i}(\bm{x})]\quad\text{and}\quad U_{2}(\bm{x})=\frac{1}{N-1}\sum_{i\neq\hat{y}}[f_{\hat{y}}(\bm{x})-f_{i}(\bm{x})]^{2}. \vspace{-0.3cm}
\end{equation}

 
 \begin{figure}[t]
	\centering
	\includegraphics[width=\linewidth]{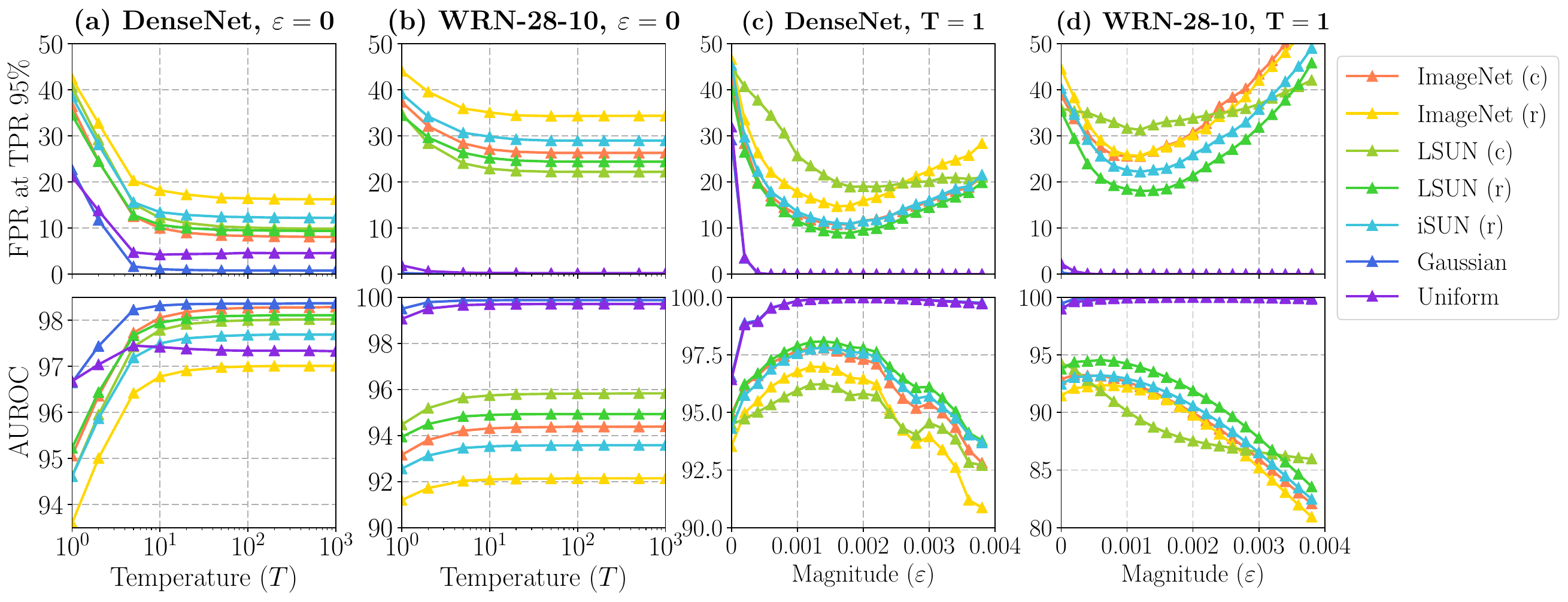}\vspace{-0.2cm}
	\caption{\small (a)(b) Effects of temperature $T$ when $\varepsilon=0$.  (c)(d) Effects of perturbation magnitude $\varepsilon$ when $T=1$. All networks are trained on CIFAR-10 (in-distribution).} 
	\label{fig::results}
	\vspace{-0.3cm}
\end{figure}

\begin{figure}[t]
   \centering
    \includegraphics[width=1\textwidth]{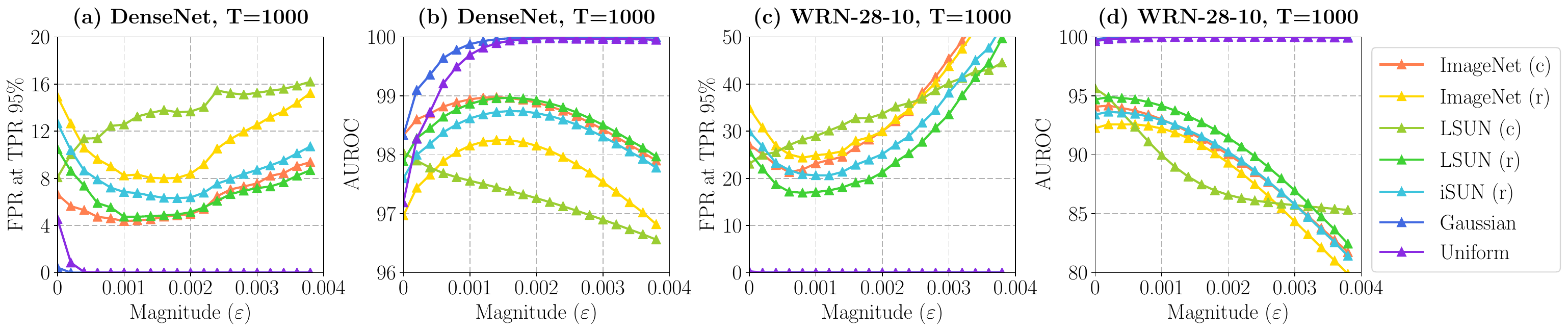}

  \vspace{-0.2cm}
  \caption{\small (a)(b) Effects of perturbation magnitude $\varepsilon$ on DenseNet when $T$ is large (e.g., $T=1000$).  (c)(d)  Effects of perturbation magnitude of $\varepsilon$ on Wide-ResNet-28-10 when $T$ is large (e.g., $T=1000$). All networks are trained on CIFAR-10. }
   \label{fig::param2}
  \vspace{-0.3cm}
\end{figure}

\para{Interpretations of $U_{1}$ and $U_{2}$}. By definition, $U_{1}$ measures the extent to which the largest unnormalized output of the neural network deviates from the remaining outputs; while $U_{2}$ measures the extent to which the remaining smaller outputs deviate from each other. We provide formal mathematical derivations in Appendix~\ref{appendix::variance}. In Figure~\ref{fig::analysis}(a), we show the distribution of $U_{1}$ for each out-of-distribution dataset vs. the in-distribution dataset (in red). We observe that the largest outputs of the neural network on in-distribution images deviate more from the remaining outputs. This is likely due to the fact that neural networks tend to make more confident predictions on in-distribution images.

\begin{figure}[]
 \vspace{-0.5cm}
  \includegraphics[width=\linewidth]{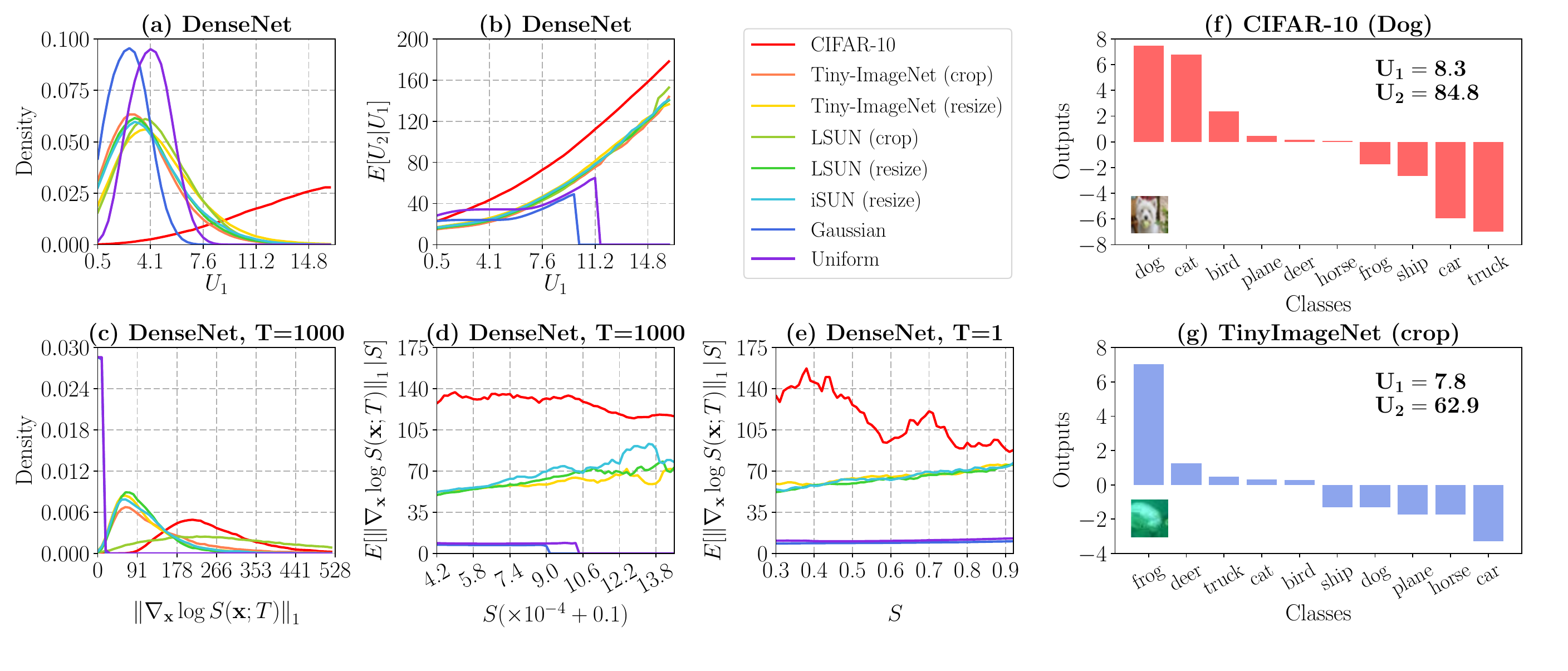}
  \vspace{-1cm}
  \caption{\small (a) Probability density of $U_{1}$ under different datasets on DenseNet.
(b) Expectations of  $U_{2}$ conditioned on  $U_{1}$ on DenseNet.
(c) Probability density of the norm of gradient on DenseNet under temperature $1,000$. 
(c)(d) Expectation of the norm of gradient conditioned on the softmax scores on DenseNet under temperature  $T=1000$ and $T=1$, respectively. 
(f)(g) Outputs of DenseNet on each class  for an image of dog from CIFAR-10 and an image from TinyImageNet (crop). The DenseNet is trained on CIFAR-10.
 Additional results on other architectures are provided  in Appendix~\ref{appendix::analysis}.}
  \label{fig::analysis}
  \vspace{-0.5cm}
\end{figure}

Further, we show  in Figure~\ref{fig::analysis}(b) the expectation of $U_{2}$ conditioned on $U_{1}$, i.e., $E[U_{2}|U_{1}]$, for each dataset. The red curve (in-distribution images) has overall higher expectation. This indicates that, when two images have similar values on $U_{1}$, the in-distribution image tends to have a much higher value of $U_{2}$ than the out-of-distribution image. In other words, for in-distribution images, the remaining outputs (excluding the largest output) tend to be more separated from each other compared to out-of-distribution datasets. This may happen when some classes in the in-distribution dataset share common features while others differ significantly. To illustrate this, in Figure~\ref{fig::analysis} (f)(g), we show the outputs of each class using a DenseNet (trained on CIFAR-10) on a dog image from CIFAR-10, and another image from TinyImageNet (crop). For the image of dog, we can observe that the largest output for the label {\em dog} is close to the output for the label {\em cat} but is quite separated from the outputs for  the label {\em car} and {\em truck}. This is likely due to the fact that, in CIFAR-10, images of dogs are very similar to the images of cats but are quite distinct from images of car and truck. For the image from TinyImageNet (crop), despite having one large output, the remaining outputs are close to each other and thus have a smaller deviation.

\para{The effects of $T$.} To see the usefulness of adopting a large $T$, we can first rewrite the softmax score function in Equation~(\ref{eq::softapprox}) as  ${S\propto {(U_{1}-U_{2}/2T)/T}}$.  Hence the softmax score is largely determined by $U_1$ and $U_2/2T$. As noted earlier, $U_{1}$ makes in-distribution images produce larger softmax scores than out-of-distribution images since $S\propto U_{1}$, while $U_{2}$ has the exact opposite effect since $S\propto -U_{2}$. Therefore, by choosing a sufficiently large temperature, we can compensate the negative impacts of $U_{2}/2T$ on the detection performance, making the softmax scores between in- and out-of-distribution images more separable. Eventually, when $T$ is sufficiently large, the distribution of softmax score is almost dominated by the distribution of $U_{1}$ and thus increasing the temperature further is no longer effective. This explains why we see in Figure~\ref{fig::results} (a)(b) that the performance does not change when $T$ is too large (e.g., $T>100$). In Appendix~\ref{appendix::prop1}, we provide a formal proof showing that the detection error eventually converges to a constant number when $T$ goes to infinity. 

\vspace{-0.2cm}
\subsection{Analysis on Input Preprocessing}\label{sec::preprocessing}
\vspace{-0.2cm}
  
As noted previously, using the temperature scaling method by itself can be effective in improving the detection performance. However, the effectiveness quickly diminishes as $T$ becomes very large. In order to make further improvement, we complement temperature scaling with input preprocessing. This has already been seen in Figure~\ref{fig::param2}, where the detection performance is improved by a large margin on most datasets when $T=1000$, provided with an appropriate perturbation magnitude $\varepsilon$ is chosen. In this subsection, we provide some intuition behind this. 

 To explain, we can look into the first order Taylor expansion of the log-softmax function for the perturbed image $\tilde{\bm{x}}$, which is given by\vspace{-0cm}
\begin{equation*}
\log S_{\hat{y}}(\tilde{\bm{x}};T)=\log S_{\hat{y}}(\bm{x};T)+ \varepsilon\left\|\nabla_{\bm{x}}\log S_{\hat{y}}(\bm{x};T)\right\|_{1}+o(\varepsilon),
\end{equation*}
where ${\bm{x}}$ is the original input. 

\begin{wrapfigure}{R}{0.3\linewidth}
\centering
 \vspace{-0.7cm}
  \includegraphics[width=1\linewidth]{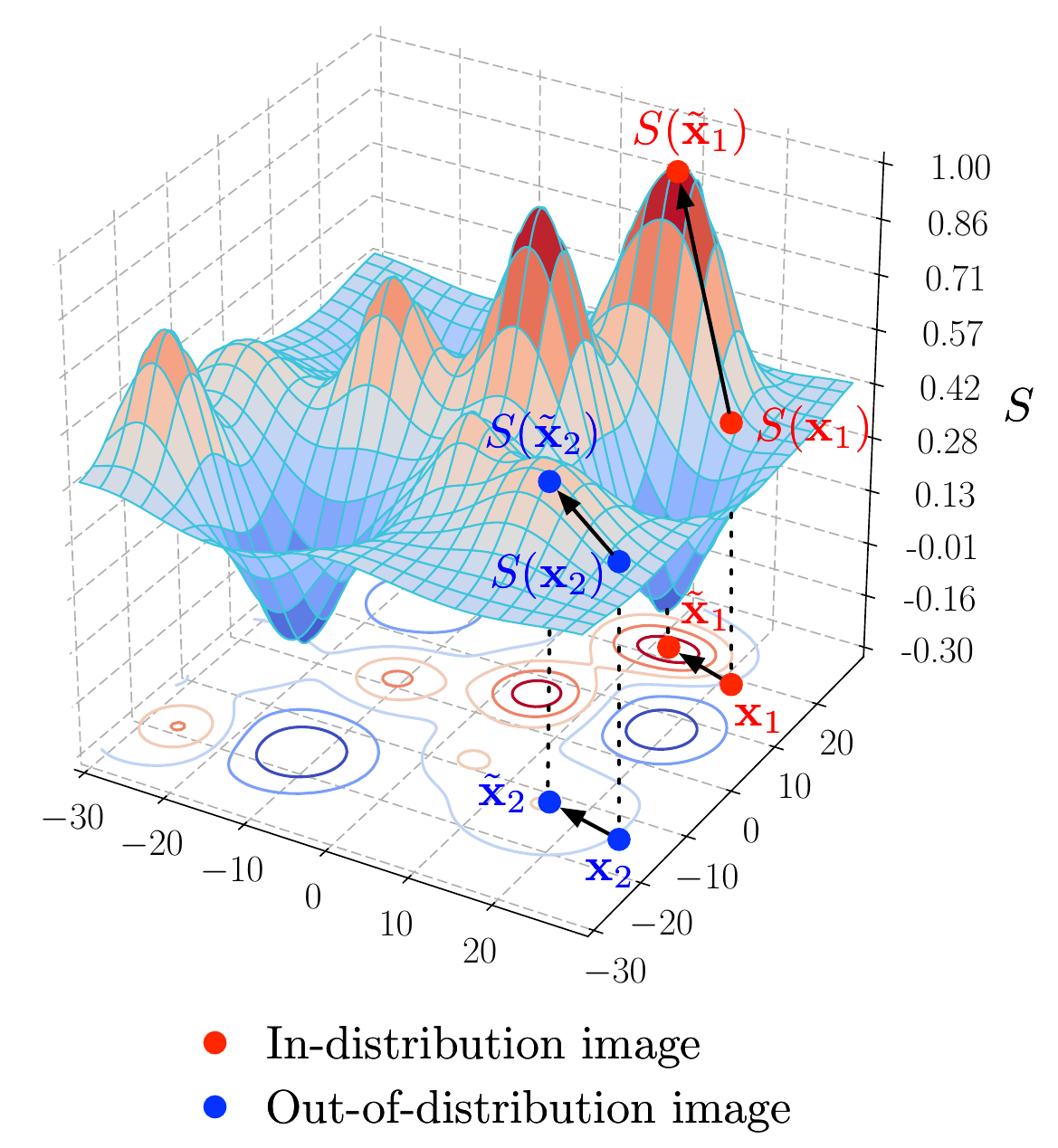}
  \vspace{-0.5cm}
  \caption{\small Illustration of effects of the input preprocessing.}
  \label{fig::gradient}
  \vspace{-0.3cm}
  \end{wrapfigure}

\para{The effects of gradient. } In Figure~\ref{fig::analysis} (c), we present the distribution of $\|\nabla_{\bm{x}}\log S(\bm{x};T)\|_{1}$ --- the {1-norm} of gradient of log-softmax with respect to the input $\bm{x}$ --- for all datasets. A salient observation is that CIFAR-10 images (in-distribution) tend to have larger values on the norm of gradient than most out-of-distribution images. To further see the effects of the norm of gradient on the softmax score, we provide in Figures~\ref{fig::analysis}~(d) the conditional expectation $E[\|\nabla_{\bm{x}}\log S(\bm{x};T)\|_{1}|S]$. We can observe that, when an in-distribution image and an out-of-distribution image have the same softmax score, the value of $\left\|\nabla_{\bm{x}}\log S(\bm{x};T)\right\|_{1}$ for in-distribution image tends to be larger. 

We illustrate the effects of the norm of gradient in Figure~\ref{fig::gradient}. Suppose that an in-distribution image $\bm{x}_{1}$ (blue) and  an out-of-distribution image $\bm{x}_{2}$ (red) have similar softmax scores, i.e.,  $S(\bm{x}_{1})\approx S(\bm{x}_{2})$. After input processing, the in-distribution image can have a much  larger softmax score than the out-of-distribution image $\bm{x}_{2}$ since $\bm{x}_{1}$ results in  a much larger value on the norm of softmax gradient than that of $\bm{x}_{2}$. Therefore, in- and out-of-distribution images are more separable from each other after input preprocessing\footnote{Similar observation can be seen when $T=1$, where we present the conditional expectation of the norm of softmax gradient in Figure~\ref{fig::analysis} (e).}.

\para{The effect of $\varepsilon$.} When the magnitude $\varepsilon$ is sufficiently small, adding perturbations  does not change the predictions of the neural network, i.e., $\hat{y}(\tilde{\bm{x}})=\hat{y}(\bm{x})$.  However, when $\varepsilon$ is not negligible, the gap of softmax scores between in- and out-of-distribution images can be affected by $\|\nabla_{\bm{x}}\log S(\bm{x};T)\|_{1}$. Our observation is  consistent with that in \citep{szegedy2013intriguing,goodfellow2014explaining,moosavi2016universal}, which show that the softmax scores tend to change significantly if small perturbations are added to the in-distribution images. It is also worth noting that using a very large $\varepsilon$ can lead to performance degradation, as seen in Figure~\ref{fig::param2}. This is likely due to the fact that the second and higher order terms in the Taylor expansion are no longer insignificant when the perturbation magnitude is too large.

\vspace{-0.2cm}
\section{Related Works and Future Directions}\label{sec::related}
\vspace{-0.2cm}
The problem of detecting out-of-distribution examples in low-dimensional space has been well-studied in various contexts (see the survey by \cite{pimentel2014review}).  Conventional methods such as density estimation, nearest neighbor and clustering analysis are widely used in detecting low-dimensional out-of-distribution examples~\citep{chow1970optimum, vincent2003manifold, ghoting2008fast, devroye2013probabilistic}, . The density estimation approach  uses probabilistic models to estimate the in-distribution density and declares a test example to be out-of-distribution if it locates in the low-density areas. The clustering method is based on the statistical distance, and declares an example to be out-of-distribution if it locates far from its neighborhood. Despite various applications in low-dimensional spaces, unfortunately, these methods are known to be  unreliable in high-dimensional space such as image space \citep{nonparametric,theis2015note}. In recent years, out-of-distribution detectors based on deep models have been proposed. \cite{schlegl2017unsupervised} train a generative adversarial networks to detect out-of-distribution examples in clinical scenario. \cite{sabokrou2016fully} train a  convolutional network to detect anomaly in scenes. \cite{andrews2016transfer} adopt transfer representation-learning for anomaly detection. All these works require enlarging or modifying the neural networks.  In a more recent work, \cite{hendrycks2016baseline} found that pre-trained neural networks can be overconfident to out-of-distribution example, limiting the effectiveness of detection. Our paper aims to improve the performance of detecting out-of-distribution examples, without requiring any change to an existing well-trained model. 


Our approach leverages the following two interesting observations to help better distinguish between in- and out-of-distribution examples: (1) On in-distribution images, modern neural networks tend to produce outputs with larger variance across class labels, and (2) neural networks have larger norm of gradient of log-softmax scores when applied on in-distribution images. 
We believe that having a better understanding of these phenomenon can lead to further insights into this problem. 


\vspace{-0.2cm}
\section{Conclusions}
\label{sec:conclusions}
\vspace{-0.2cm}

In this paper, we propose a simple and effective method to detect out-of-distribution data samples in neural networks. Our method does not require retraining the neural network and significantly improves on the baseline method~\cite{hendrycks2016baseline} on different neural  architectures across  various in and out-distribution dataset pairs. We empirically analyze the method under different parameter settings, and provide some  insights behind the approach. Future work involves exploring our method in other applications such as speech recognition and natural language processing. 

\section*{Acknowledgments}
The research reported here was supported by NSF Grant CPS ECCS 1739189.

\bibliography{iclr2018_conference}
\bibliographystyle{iclr2018_conference}

\clearpage
\begin{appendix}
\vspace{-0.2cm}
\clearpage
%
%
%
%
%
%
\section{Supplementary Results in Section~\ref{sec::temperature} and \ref{sec::preprocessing}}\label{appendix::analysis}

\begin{figure}[h]
	\centering
	\includegraphics[width=\linewidth]{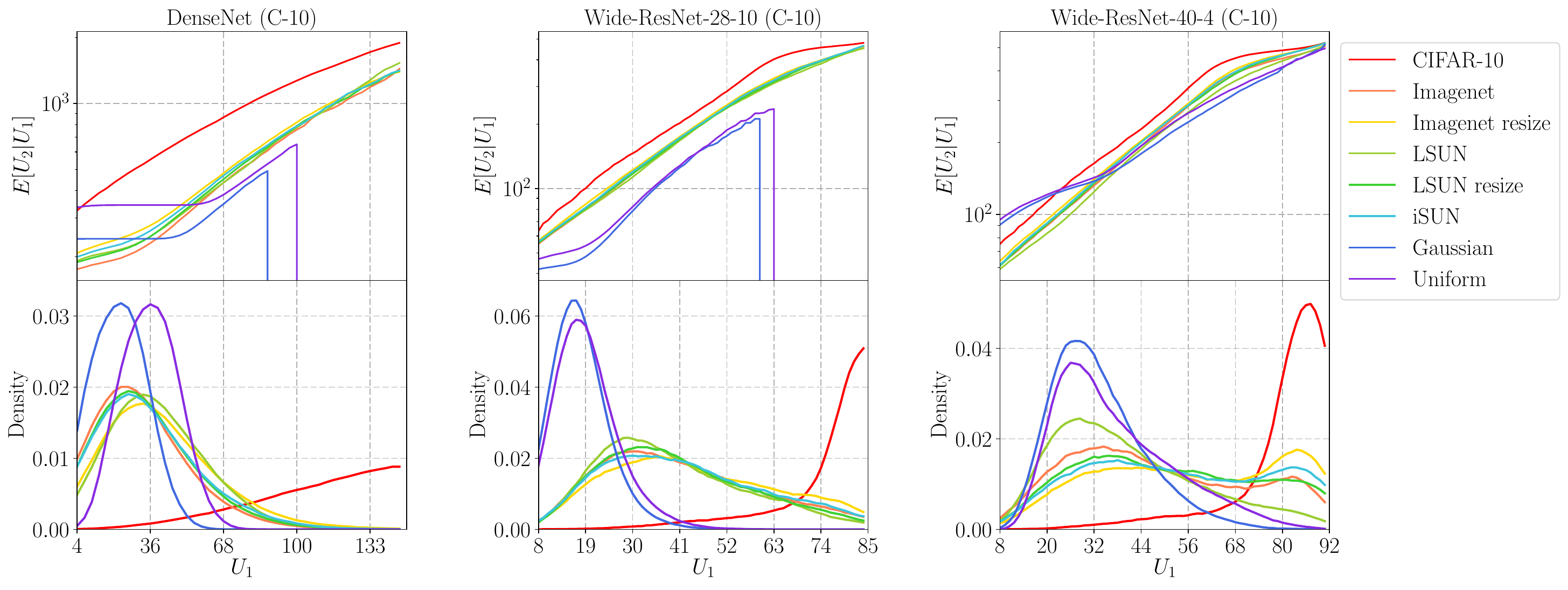}\vspace{-0.2cm}
	\caption{\small Expectation of the second order term $U_{2}$ conditioned on the first order term $U_{1}$ under DenseNet, Wide-ResNet-28-10 and Wide ResNet-40-4. All networks are trained on CIFAR-10.}
	\vspace{-0.3cm}
\end{figure}

\begin{figure}[h]
	\centering
	\includegraphics[width=\linewidth]{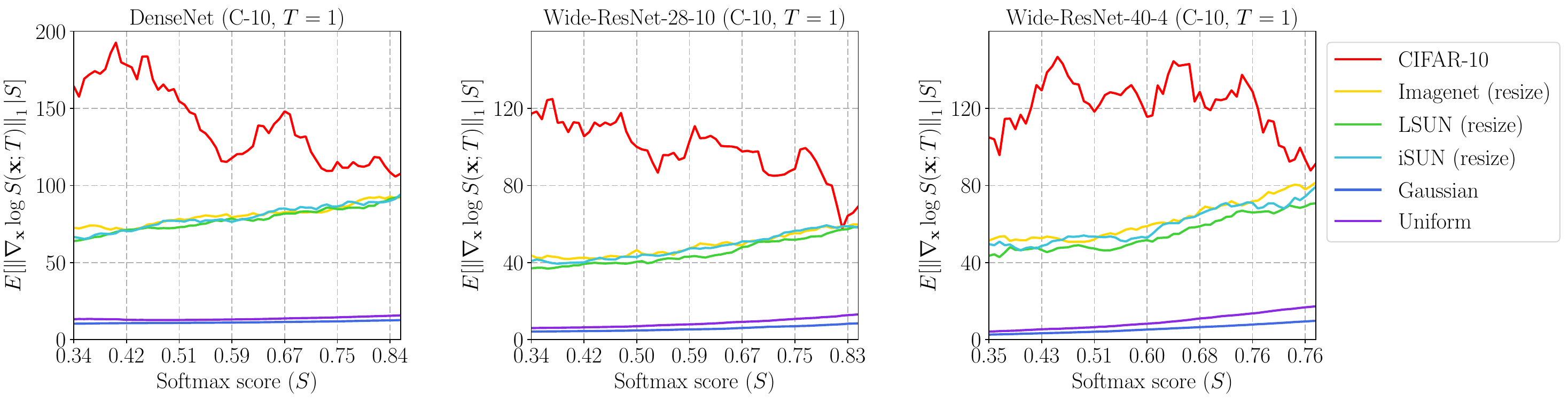}\vspace{-0.2cm}
	\caption{\small Expectation of gradient norms conditioned on the softmax scores under DenseNet, Wide-ResNet-28-10 and Wide ResNet-40-4,  where the temperature scaling is not used. All networks are trained on CIFAR-10.}
	\vspace{-0.3cm}
\end{figure}

\begin{figure}[h]
	\centering
	\includegraphics[width=\linewidth]{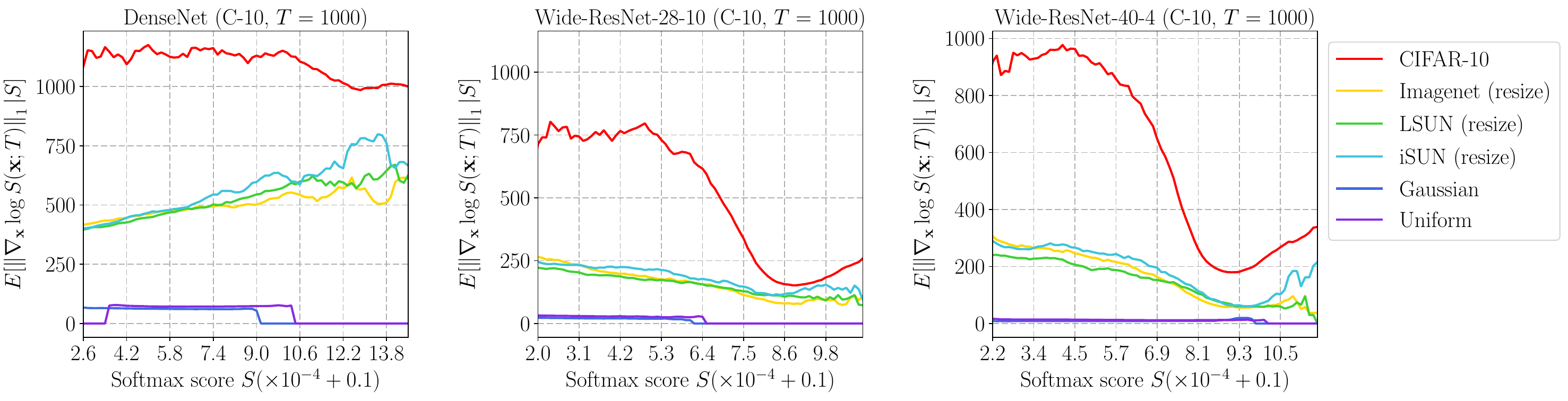}\vspace{-0.2cm}
	\caption{\small Expectation of gradient norms conditioned on the softmax scores under DenseNet, Wide-ResNet-28-10 and Wide ResNet-40-4, where the optimal temperature is used, i.e., $T=1000$. All networks are trained on CIFAR-10.}
	\vspace{-0.3cm}
\end{figure}

\clearpage

\section{Taylor Expansion}\label{appendix::taylor}
In this section, we present the Taylor expansion of the soft-max score function:
\begin{align*}
S_{\hat{y}}(\bm{x};T)&=\frac{\exp\left(f_{\hat{y}}(\bm{x})/T\right)}{\sum_{i=1}^{N}\exp(f_{i}(\bm{x})/T)}\\
&=\frac{1}{\sum_{i=1}^{N}\exp\left(\frac{f_{i}(\bm{x})-f_{\hat{y}}(\bm{x})}{T}\right)}\\
&=\frac{1}{\sum_{i=1}^{N}\left[1+\frac{f_{i}(\bm{x})-f_{\hat{y}}(\bm{x})}{T}+\frac{1}{2!}\frac{(f_{i}(\bm{x})-f_{\hat{y}}(\bm{x}))^{2}}{T^{2}}+o\left(\frac{1}{T^{2}}\right)\right]}&&\text{by Taylor expansion}\\
&\approx\frac{1}{N-\frac{1}{T}\sum_{i=1}^{N}[f_{\hat{y}}(\bm{x})-f_{i}(\bm{x})]+\frac{1}{2T^{2}}\sum_{i=1}^{N}[f_{i}(\bm{x})-f_{\hat{y}}(\bm{x})]^{2}}
\end{align*}

\section{Proposition~\ref{prop::1}}\label{appendix::prop1}
The following proposition~\ref{prop::1} shows that the detection error $P_{e}(T,0)\approx c$ if $T$ is sufficiently large. Thus, increasing the temperature further can only slightly improve the detection performance.
\begin{proposition}~\label{prop::1}
There exists a constant $c$ only depending on function $U_{1}$, in-distribution $P_{\bm{X}}$ and out-of-distribution $Q_{\bm{X}}$ such that 
$\lim_{T\rightarrow\infty}P_{e}(T,\varepsilon)=c$, when $\varepsilon=0$ (i.e., no input preprocessing). \vspace{-0.3cm}
\end{proposition}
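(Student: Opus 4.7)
My plan is to show that, in the large-$T$ limit, the softmax-score detector degenerates (up to a monotone reparametrization) into a detector based on $U_{1}(\bm{x})$, and then read off the limiting detection error from the fixed 95\% TPR constraint. The starting point is the Taylor expansion in Appendix~\ref{appendix::taylor}, which gives
\begin{equation*}
S_{\hat{y}}(\bm{x};T) \;=\; \frac{1}{N} \;+\; \frac{N-1}{N^{2}}\,\frac{U_{1}(\bm{x})}{T} \;+\; R(\bm{x},T),
\end{equation*}
where $R(\bm{x},T) = O(1/T^{2})$ with a constant controlled by $U_{2}(\bm{x})$ and $U_{1}(\bm{x})^{2}$. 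Hence the rescaled score $V_{T}(\bm{x}) := T\bigl(S_{\hat{y}}(\bm{x};T)-1/N\bigr)$ converges pointwise to $V_{\infty}(\bm{x}) := \tfrac{N-1}{N^{2}}\,U_{1}(\bm{x})$, and the event $\{S_{\hat{y}}(\bm{x};T)>\delta\}$ coincides with $\{V_{T}(\bm{x}) > T(\delta-1/N)\}$.

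Next I would translate the 95\% TPR calibration into a condition on a rescaled threshold. Let $F_{T,0}(v) := \mathbb{P}(V_{T}(\bm{X})\le v\mid Z=0)$ and $F_{\infty,0}$ its pointwise limit, i.e.\ the CDF of $\tfrac{N-1}{N^{2}}U_{1}(\bm{X})$ under $P_{\bm{X}}$. Assuming $F_{\infty,0}$ is continuous at its $0.95$-quantile $v^{\ast}$ (a mild regularity assumption on $U_{1}$ under $P_{\bm{X}}$), and that the remainder $R(\bm{x},T)$ is uniformly $o(1/T)$ in probability (which follows if $U_{2}$ has finite expectation under both $P_{\bm{X}}$ and $Q_{\bm{X}}$), one gets $F_{T,0}\to F_{\infty,0}$ in distribution and hence the calibrated threshold satisfies $T(\delta_{T}-1/N) \to v^{\ast}$. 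The same argument applied under $Q_{\bm{X}}$ gives
\begin{equation*}
\mathrm{FPR}(T,0) \;=\; 1-F_{T,1}\bigl(T(\delta_{T}-1/N)\bigr) \;\longrightarrow\; 1-F_{\infty,1}(v^{\ast}),
\end{equation*}
so that $P_{e}(T,0)\to c := 0.5\cdot 0.05 + 0.5\cdot\bigl(1-F_{\infty,1}(v^{\ast})\bigr)$, a constant depending only on the distributions of $U_{1}(\bm{X})$ under $P_{\bm{X}}$ and $Q_{\bm{X}}$.

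The step I expect to be the main obstacle is the limit interchange: one needs the family of distributions of $V_{T}(\bm{X})$ under $P_{\bm{X}}$ and $Q_{\bm{X}}$ to converge weakly to that of $V_{\infty}(\bm{X})$, and simultaneously the quantile $v_{T}^{\ast}$ defined by the 95\% TPR constraint to converge to $v^{\ast}$. Pointwise convergence $V_{T}(\bm{x})\to V_{\infty}(\bm{x})$ plus a dominating integrable bound on $|V_{T}(\bm{x})|$ (coming from a uniform bound on $\|\bm{f}(\bm{x})\|_{\infty}$ or on $U_{2}$) gives weak convergence by the dominated-convergence theorem; quantile convergence then follows from continuity of $F_{\infty,0}$ at $v^{\ast}$. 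All other steps---the Taylor expansion and the algebraic identification of $c$---are routine once this convergence is in place.
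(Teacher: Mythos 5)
Your proposal is correct and follows essentially the same route as the paper's Appendix E proof: rescale the softmax score so that it converges pointwise to a fixed multiple of $U_{1}(\bm{X})$ (you use $T\bigl(S_{\hat{y}}(\bm{X};T)-1/N\bigr)\to\frac{N-1}{N^{2}}U_{1}(\bm{X})$, the paper the equivalent monotone reparametrization $T\bigl(N-1/S_{\hat{y}}(\bm{X};T)\bigr)\to(N-1)U_{1}(\bm{X})$), let the 95\% TPR constraint pin the rescaled threshold at the corresponding $P_{\bm{X}}$-quantile of $U_{1}$, and identify $c$ as the resulting $Q_{\bm{X}}$-tail probability. One small remark: your dominating-bound/finite-$\mathbb{E}[U_{2}]$ hypothesis is superfluous, since pointwise (hence almost-sure) convergence of the rescaled score already implies convergence in distribution under both $P_{\bm{X}}$ and $Q_{\bm{X}}$; the continuity-at-the-quantile condition you flag is the only regularity actually needed, and the paper assumes it implicitly when it chooses $\alpha^{*}$ with $P_{\bm{X}}\left((N-1)U_{1}(\bm{X})>\alpha^{*}\right)=0.95$.
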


\begin{proof}
Since 
$$S_{\hat{y}}(\bm{X};T)=\frac{\exp({f_{\hat{y}}(\bm{X})/T)}}{\sum_{i=1}^{N}\exp(f_{i}(\bm{X})/T)}=\frac{1}{1+\sum_{i\neq \hat{y}}\exp([f_{i}(\bm{X})-f_{\hat{y}}(\bm{X})]/T)}$$
Therefore, for any $\bm{X}$,
\begin{align*}
\lim_{T\rightarrow \infty}T\left(-\frac{1}{S_{\hat{y}}(\bm{X};T)}+N\right)&=\lim_{T\rightarrow \infty}\sum_{i\neq \hat{y}}T\left[1-\exp\left(\frac{f_{i}(\bm{X})-f_{\hat{y}}(\bm{X})}{T}\right)\right]\\
&=\sum_{i\neq \hat{y}}[f_{\hat{y}}(\bm{X})-f_{i}(\bm{X})]=(N-1)U_{1}(\bm{X})
\end{align*}
This indicates that the random variable 
$$T\left(-\frac{1}{S_{\hat{y}}(\bm{X};T)}+N\right)\rightarrow (N-1)U_{1}(\bm{X})\quad a.s.$$
as $T\rightarrow \infty$. This means that for a specific $\alpha>0$, choosing the threshold $\delta_{T}=1/(N-\alpha/T)$, then the false positive rate 
\begin{align*}
\text{FPR}(T)=Q_{\bm{X}}(S_{\hat{y}}(\bm{X};T)>1/(N-\alpha/T))&=Q_{\bm{X}}\left(T\left(N-\frac{1}{S_{\hat{y}}(\bm{X};T)}\right)>\alpha\right)\\
&\xrightarrow{T\rightarrow\infty}Q_{\bm{X}}\left((N-1)U_{1}(\bm{X})>\alpha\right),
\end{align*}
and the true positive rate 
\begin{align*}
\text{TPR}(T)=P_{\bm{X}}(S_{\hat{y}}(\bm{X};T)>1/(N-\alpha/T))&=P_{\bm{X}}\left(T\left(N-\frac{1}{S_{\hat{y}}(\bm{X};T)}\right)>\alpha\right)\\
&\xrightarrow{T\rightarrow\infty}P_{\bm{X}}\left((N-1)U_{1}(\bm{X})>\alpha\right).
\end{align*}
Choosing $\alpha^{*}$ such that $P_{\bm{X}}\left((N-1)U_{1}(\bm{X})>\alpha^{*}\right)=0.95$, then TPR$(T)\rightarrow 0.95$ as $T\rightarrow\infty$ and at the same time FPR$(T)\rightarrow Q_{\bm{X}}\left((N-1)U_{1}(\bm{X})>\alpha^{*}\right)$ as $T\rightarrow \infty$.
There exists a constant $c$ depending on $U_{1}, P_{\bm{X}}, Q_{\bm{X}}$ and $P_{Z}$, such that
$$\lim_{T\rightarrow \infty}P_{e}(T,0)=0.05P(Z=0)+P(Z=1)Q_{\bm{X}}\left((N-1)U_{1}(\bm{X})>\alpha^{*}\right)=c.$$
\end{proof}

\section{Analysis of Temperature}\label{appendix::variance}
For simplicity of the notations, let $\Delta_{i}=f_{\hat{y}}-f_{i}$ and thus $\Delta=\{\Delta_{i}\}_{i\neq \hat{y}}$.  Besides, let $\bar{\Delta}$ denote the mean of the set $\Delta$. Therefore, $$\bar{\Delta}=\frac{1}{N-1}\sum_{i\neq\hat{y}}\Delta_{i}=\frac{1}{N-1}\sum_{i\neq\hat{y}}[f_{\hat{y}}-f_{i}]=U_{1}.$$
Equivalently, 
$$U_{1}=\text{Mean}(\Delta).$$
Next, we will show 
$$U_{2}=\frac{1}{N-1}\sum_{i\neq\hat{y}}[f_{\hat{y}}-f_{i}]^{2}=\xoverbrace{\frac{1}{N-1}\sum_{i\neq\hat{y}}[\Delta_{i}-\bar{\Delta}]^{2}}^{\text{Variance$^{2}$($\Delta$)}}+\xoverbrace{\bar{\Delta}^{2}}^{\text{Mean}^{2}(\Delta)}.$$
Since 
\begin{align*}
U_{2}&=\frac{1}{N-1}\sum_{i\neq\hat{y}}\Delta_{i}^{2} &&\text{by} \Delta_{i}=f_{\hat{y}}-f_{i}\\
&= \frac{1}{N-1}\sum_{i\neq\hat{y}}(\Delta_{i}-\bar{\Delta}+{\bar{\Delta}})^{2}\\
&=\frac{1}{N-1}\sum_{i\neq\hat{y}}[(\Delta_{i}-\bar{\Delta})^{2}-2(\Delta_{i}-\bar{\Delta})\bar{\Delta}+\bar{\Delta}^{2}]\\
&=\xunderbrace{\frac{1}{N-1}\sum_{i\neq\hat{y}}[\Delta_{i}-\bar{\Delta}]^{2}}_{\text{Variance}^{2}(\Delta)}-\xunderbrace{\frac{2\bar{\Delta}}{N-1}\sum_{i\neq \hat{y}}(\Delta_{i}-\bar{\Delta})}_{=0}+\xunderbrace{\bar{\Delta}^{2}}_{\text{Mean}^{2}(\Delta)}
\end{align*}
then
$$U_{2}=\text{Variance}^{2}(\Delta)+\text{Mean}^{2}(\Delta)$$

\section{Additional Results on Distance Measurement}
Apart from the Maximum Mean Discrepancy, we also calculate the Energy distance between in- and out-of-distribution datasets. Let $P$ and $Q$ denote two different distributions. Then the energy distance between distributions $P$ and $Q$ is defined as
$$D_{\text{energy}}^2(P,Q)=2\mathbb{E}_{V\sim P, W\sim Q}\|X-Y\|-\mathbb{E}_{V,V'\sim P}\|X-X'\|-\mathbb{E}_{W,W'\sim Q}\|Y-Y'\|.$$
Therefore, the energy distance between two datasets $V=\{V_{1},...,V_{m}\}\stackrel{iid}{\sim} P$ and $W=\{W_{1},...,W_{m}\}\stackrel{iid}{\sim} Q$ is defined as 
$$\widehat{D_{\text{energy}}}^2(P,Q)=\frac{2}{m^{2}}\sum_{i=1}^{m}\sum_{j=1}^{m}\|V_{i}-W_{j}\|-\frac{1}{{{m\choose{2}}}}\sum_{i\neq j}\|V_{i}-V_{j}\|-\frac{1}{{{m\choose{2}}}}\sum_{i\neq j}\|W_{i}-W_{j}\|.$$
In the experiment, we use the 2-norm $\|\cdot\|_{2}$.

\begin{center}
\begin{tabular}{ lllll } 
\toprule
\textbf{In-distribution} & \textbf{Out-of-distribution} & \textbf{MMD} &\textbf{Energy}\\
\textbf{datasets} & \textbf{Datasets} & \textbf{Distance} &\textbf{Distance}\\
\midrule
\multirow{5}{0.15 \linewidth}{CIFAR-100 } 
& Tiny-ImageNet (crop) &  0.41 & 2.25\\ 
& LSUN (crop) &  0.43 & 2.31\\ 
& Tiny-ImageNet (resize) &  0.088 &0.54\\ 
& LSUN (resize) &  0.12 &0.63\\
\midrule
\bottomrule
\end{tabular}
\end{center}

\clearpage

\begin{figure}[t]
\centering
  \vspace{-0.6cm}
  \includegraphics[width=0.7\linewidth]{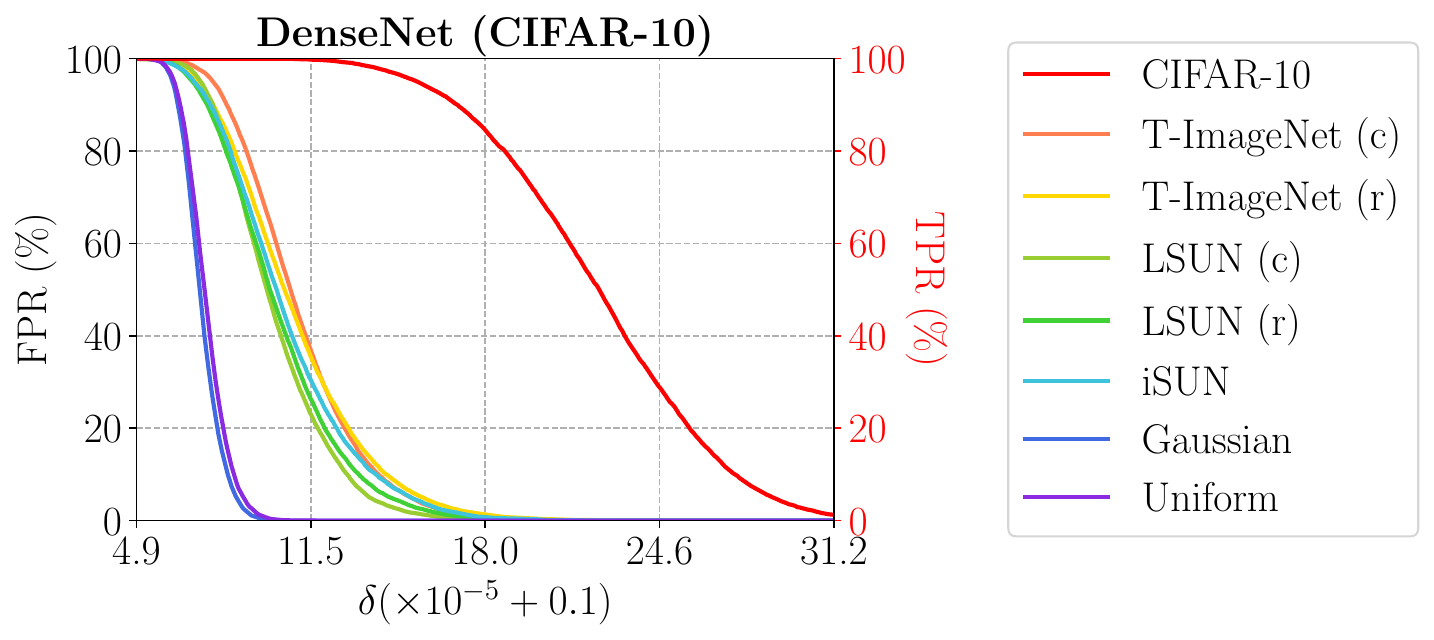}
  \caption{\small False positive rate (FPR) and true positive rate (TPR) under different thresholds ($\delta$) when the temperature $(T)$ is set to $1,000$ and  the perturbation magnitude ($\varepsilon$) is set to $0.0014$. The DenseNet is trained on CIFAR-10.}
  \label{fig::fpr-error}
  \vspace{-0.3cm}
  \end{figure}
  

\section{Additional Discussions}\label{appendix::robust}
In this section, we present additional discussion on the proposed method. We first empirically show how the threshold $\delta$ affects the detection performance. We next show how the proposed method performs when the parameters are tuned on a certain out-of-distribution dataset and are evaluated  on  other out-of-distribution datasets.  

\para{Effects of the threshold.} We analyze how the threshold  affects the following metrics: (1) \text{FPR}, i.e., the fraction of out-of-distribution images misclassified as in-distribution images; (2) \text{TPR}, i.e, the fraction of in-distribution images correctly classified as in-distribution images. In Figure~\ref{fig::fpr-error}, we show how the thresholds affect FPR and TPR when the temperature and perturbation magnitude are chosen optimally (i.e., $T=1,000$, $\varepsilon=0.0014$). From the figure, we can observe that the threshold corresponding to 95\% TPR can produce small FPRs on all out-of-distribution datasets.

\textbf{Difficult-to-classify images and difficult-to-detect images.} We analyze the correlation between the images that tend to be out-of-distribution and images on which the neural network tend to make incorrect predictions. To understand the correlation, we devise the following  experiment. For the fixed temperature $T$ and perturbation magnitude $\varepsilon$, we first set $\delta$ to the softmax score threshold corresponding to a certain true positive rate. Next, we calculate the test accuracy on the images with softmax scores above $\delta$ and  the test accuracy on the images with softmax score below $\delta$, respectively. We report the results  in  Figure~\ref{fig::accuracy}(a) and (b). From these two figures, we can observe that the images that are difficult to detect are more likely to be the images that are difficult to classify. For example, the DenseNet can achieve up to  98.5\% test accuracy on the images having softmax scores above the threshold corresponding to 80\% TPR, but can only achieve around 82\% test accuracy on the images having softmax scores below the threshold corresponding to 80\% TPR.

\begin{figure}[t]
	\centering
	\includegraphics[width=0.9\linewidth]{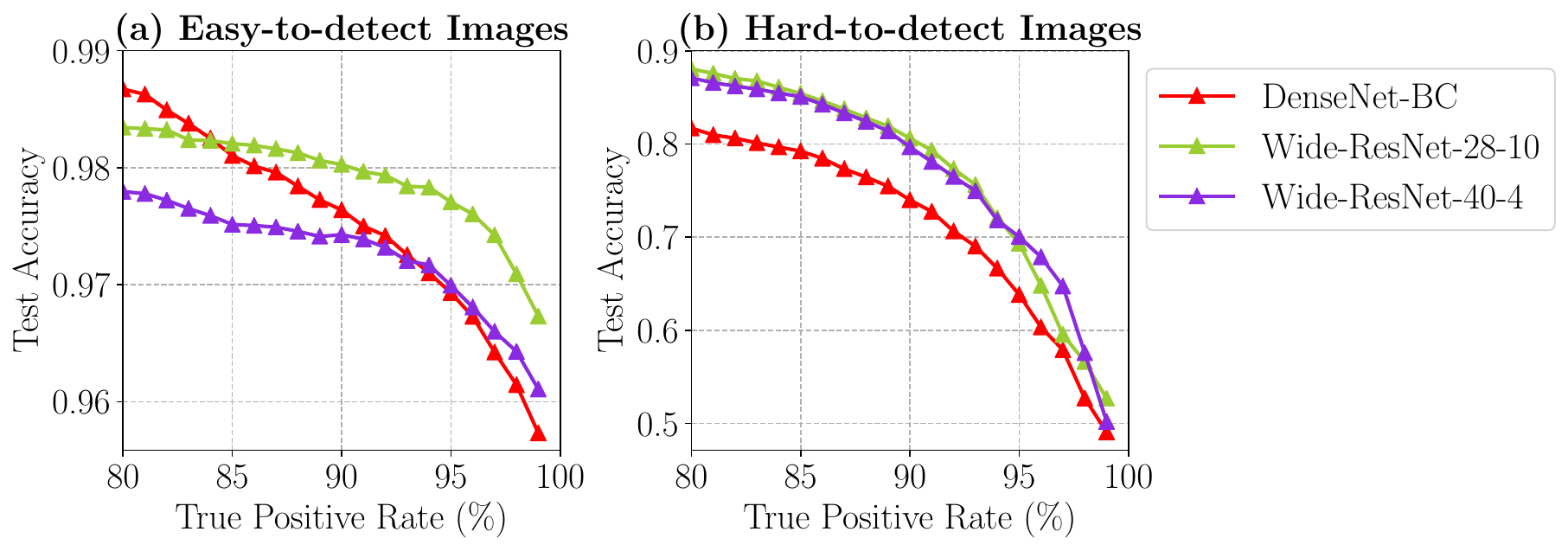}
	\caption{\small (a) The test accuracy on the images having softmax scores above the threshold corresponding to a certain true positive rate. (b) The test accuracy on the images having softmax scores below the threshold corresponding to a certain true positive rate. All networks are trained on CIFAR-10.}
	\label{fig::accuracy}
\end{figure}

\end{appendix}

\end{document}